\let\proof\@undefined
\let\endproof\@undefined
\acrodef{cbf}[CBF]{Control Barrier Function}
\acrodef{mpc}[MPC]{Model Predictive Control}
\acrodef{qp}[QP]{Quadratic Program}
\acrodef{sdf}[SDF]{Signed Distance Function}
\theoremstyle{definition}
\newtheorem{definition}{Definition}
\newtheorem{theorem}{Theorem}
\newtheorem{lemma}{Lemma}
\newtheorem{proposition}{Proposition}
\theoremstyle{remark}
\newtheorem{remark}{Remark}
\title{\LARGE \bf Safe Quadrotor Navigation using Composite Control Barrier Functions}
\author{Marvin Harms, Martin Jacquet, Kostas Alexis
	\thanks{Autonomous Robots Lab, Norwegian University of Science and Technology (NTNU), Trondheim, Norway,
    {\tt \footnotesize
        \href{mailto:marvin.c.harms@ntnu.no}{marvin.c.harms@ntnu.no}}
    }
    \thanks{This work was supported by the European Commission Horizon Europe grants DIGIFOREST (EC 101070405) and SPEAR (EC 101119774).}
}
\begin{document}
\maketitle

\begin{abstract}
This paper introduces a safety filter to ensure collision avoidance for multirotor aerial robots. The proposed formalism leverages a single Composite Control Barrier Function from all position constraints acting on a third-order nonlinear representation of the robot's dynamics. We analyze the recursive feasibility of the safety filter under the composite constraint and demonstrate that the infeasible set is negligible. The proposed method allows computational scalability against thousands of constraints and, thus, complex scenes with numerous obstacles. We experimentally demonstrate its ability to guarantee the safety of a quadrotor with an onboard LiDAR, operating in both indoor and outdoor cluttered environments against both naive and adversarial nominal policies.
\end{abstract}

\section{Introduction}

Autonomous aerial robots, such as multirotor platforms, are widely used in scientific and industrial contexts alike. A diverse set of works has focused on solving the core technological challenges among which safe autonomous navigation in unknown environments is of paramount importance. A multitude of methods for collision avoidance exists that include reactive schemes~\cite{alejo2016reactive}, control strategies accounting for the presence of obstacles as constraints~\cite{dentler2016real,alexis2016robust}, learning~\cite{kahn2017uncertainty,kulkarni2024reinforcement}, and map-based motion planning~\cite{karaman2011sampling,allen2016real}. Despite the unprecedented progress, ensuring the safe autonomous navigation of aerial robots in complex environments remains a pertinent challenge.

Control-driven methods and safety filters represent an appealing strategy for collision-free safe flight in that they correspond to a key layer of the autonomy stack that typically runs at high update rates while integrating the system's dynamics natively. Potentially working in synergy with other approaches, such as map-based motion planning, they represent a formal last-resort safety mechanism even when higher-level methods fail, e.g. due to global map drift or planning at low rates. Successful examples of control-driven methods applied to multirotor collision avoidance are \ac{mpc} and \acp{cbf}. 

However, control-oriented methods to ensure collision avoidance of aerial robots often struggle to scale to cluttered and obstacle-rich environments due to the number or complexity of the imposed constraints, which becomes prohibitive for high-rate control onboard computationally constrained platforms~\cite{alexis2016robust}. Safety filters based on \acp{cbf} often pose a computationally cheap alternative to \ac{mpc}. Nevertheless, finding an admissible \ac{cbf} for high-order systems is a challenging task.

Aiming to overcome these challenges, the contribution of this work is two-fold. First, we propose a computationally scalable safety filter using a composite \ac{cbf} representing all collision constraints. Second, the proposed approach is formulated for a third-order nonlinear system, which allows the formulation of a safety filter without requiring dynamic inversion of the nonlinear attitude dynamics, and a better representation of the actual system constraints such as positive thrust values.
The method is first validated by demonstrating its scalability against the order of magnitude of the number of position constraints. The safety filter is then tested in hardware experiments, indoors and outdoors, both when guided by a naive policy that is agnostic to obstacles and when guided by an adversarial policy trying to collide with the environment.

The remainder of this work is structured as follows: Sec.~\ref{sec:relatedwork} presents an overview of the related literature, while Sec.~\ref{sec:preliminaries} provides an overview on \ac{cbf} theory. Sec.~\ref{sec:approach} introduces the problem statement, proposed method, and a brief analysis thereof. The experimental results are detailed in Sec.~\ref{sec:evaluation} and concluding remarks are presented in Sec.~\ref{sec:conclusions}.

\section{Related Work}\label{sec:relatedwork}

Due to the inherent safety concern in aerial robots, multiple prior works have considered safe control for multirotors using \acp{cbf}. One earlier work employs multiple exponential \acp{cbf} inside a safety filter \ac{qp} for a linear, second-order system to obtain safe acceleration commands for a double integrator. In \cite{cascaded_CBF}, exponential \acp{cbf} are again used to formulate a safety filter using two cascaded Quadratic Programs (QP) for thrust and torque constraints. The cascaded structure is reported to lead to feasibility issues. In \cite{range_sensing_CBF}, the authors design a safe control law for a multirotor using \acp{cbf} with configuration constraints and collision constraints. This approach only manipulates thrust for obstacle avoidance, while attitude is only altered to satisfy the configuration constraint.
The authors of \cite{cbf_potential_field_analysis} perform a comparative analysis of \acp{cbf} and potential fields for obstacle avoidance, demonstrating results in a 2D planar quadrotor using a safety filter.
In the recent work \cite{backstepping_CBF}, a \ac{cbf} backstepping approach is used to impose multiple, convex constraints for position, velocity and rates for a multirotor in simulation. In \cite{collisionConeCBF}, a novel type of \ac{cbf} is proposed for the avoidance of dynamic obstacles. However, this formulation disallows directly approaching obstacles, which leads to feasibility issues in highly cluttered environments.
In \cite{compositeCBFtemporal}, the authors propose using the recently introduced composite \acp{cbf} for control barrier synthesis in unknown environments, demonstrating the applicability in simulation on a linear, double integrator model of a quadrotor in a 2D map.
In \cite{CBF_aided_teleop}, a teleoperation scheme for safe position control of a quadrotor, modelled as a linear system, is presented, where the local occupancy map is represented as an \ac{sdf} and used as a discrete-time \ac{cbf}.
Methods for learning \acp{cbf} have also been recently proposed to achieve collision avoidance. Here, the sampling-based approach shown in \cite{dawson2022learning} demonstrate safe control for a first order systems, while the work in \cite{harms2024neural} has been applied to a quadrotor modelled as a double integrator but has yet to demonstrated for systems of order 3 and higher.

\section{Preliminaries}\label{sec:preliminaries}
\subsection{Notation}
\begin{tabbing}
 \hspace*{2.2cm} \= \kill
  $\mathbf{x} \in \mathcal{X},  \mathbf{u} \in \mathcal{U}$ \>  state and input vectors \\[0.5ex]
  $\mathcal{L}_f h, \mathcal{L}_g h \mathbf{u}$ \>  Lie derivatives of $h$ along $f$, $g \mathbf{u}$ \\[0.5ex] 
  $\frac{d}{dt} V$, $\nabla_\mathbf{x} V$ \>  time derivative of $V$, gradient of $V$ w.r.t. $\mathbf{x}$ \\[0.5ex] 
  $\mathbf{a} \cdot \mathbf{b}$, $\mathbf{a} \times \mathbf{b}$ \>  dot product, cross product of $\mathbf{a}$ and $\mathbf{b}$ \\[0.5ex]  
  $\|\cdot\|$ \>  Euclidean norm \\[0.5ex] 
  $[\mathbf{a}]_{\times}$ \>  skew-symmetric matrix associated with $\mathbf{a}$ \\[0.5ex]
\end{tabbing}

\subsection{Control Barrier Functions}
Let us consider the general, control affine system

\small
\begin{equation}\label{system}
    \frac{d}{dt} \mathbf{x} = f(\mathbf{x}) + g(\mathbf{x}) \mathbf{u},
\end{equation}
\normalsize
where $\mathbf{x} \in \mathcal{X} \subseteq \mathbb{R}^n$ and $\mathbf{u} \in \mathcal{U} \subseteq \mathbb{R}^m$. We now consider a continuously differentiable function $h: \mathcal{X} \rightarrow \mathbb{R}$ with the property $\{\mathbf{x} | \frac{dh}{d\mathbf{x}}(\mathbf{x}) = 0\} \cap \{\mathbf{x} | h(\mathbf{x}) = 0\} = \emptyset$, which describes the superlevel set $\mathcal{X}_\text{safe} = \{\mathbf{x} \in \mathcal{X}  :  h(\mathbf{x}) \geq 0 \}$.

\begin{definition}[Control Invariant Set]
    A set $\mathcal{X}_\text{safe}$ is a forward control invariant set for the system \eqref{system}, if for any $\mathbf{x}({t_0}) \in \mathcal{X}_\text{safe}$ there exists at least one input trajectory $\mathbf{u}(t) \in \mathcal{U}$ such that $\mathbf{x}({t}) \in \mathcal{X}_\text{safe} \quad \forall t\geq t_0$ under the system \eqref{system}.
\end{definition}

\begin{definition}[Control Barrier Function \cite{ames2019control}]\label{def:cbf}
    Let $\mathcal{X}_\text{safe}$ be the $0$-superlevel set of a continuously differentiable function
    $h: \mathcal{X} \rightarrow \mathbb{R}$ with the property that $\mathcal{X}_\text{safe} = \{ \mathbf{x} | h(\mathbf{x})\geq 0 \}$. Then, $h$ is a \ac{cbf} if there exists an extended class $\mathcal{K}_\infty$ function $\alpha(\cdot)$ such that for the system \eqref{system} it holds:

    \small
    \begin{equation}\label{CBF_lie_condition}
        \underset{\mathbf{u} \in \mathcal{U}}{\text{sup}} [\mathcal{L}_f h(\mathbf{x}) + \mathcal{L}_g h(\mathbf{x}) \mathbf{u}] \geq -\alpha(h(\mathbf{x}))
    \end{equation}
    \normalsize
    for all $\mathbf{x} \in \mathcal{X}_\text{safe}$. Further, an input $\mathbf{u}$ is considered safe with respect to a valid CBF, if it satisfies \eqref{CBF_lie_condition}.
\end{definition}

From the above, it becomes clear that $\mathcal{X}_\text{safe}$ is a control invariant set for the system \eqref{system} subject to any control law satisfying \eqref{CBF_lie_condition}. See \cite{ames2016control} for a proof. Condition \eqref{CBF_lie_condition} requires a nonzero Lie derivative, e.g. $\mathcal{L}_g h(\mathbf{x}) \neq 0$ in general, restricting the use of \acp{cbf} to functions $h$ of relative degree~1. Systematic approaches to construct a CBF with relative degree greater than one include exponential control barrier functions \cite{ECBF}, high-order control barrier functions \cite{HOCBF} and backstepping control barrier functions \cite{BCBF}.

\subsection{Exponential Control Barrier Functions}
Consider a safety metric $h(\mathbf{x})$ of uniform relative degree $r\geq1$. Repeatedly differentiating $h(\mathbf{x})$ with respect to time results in terms $\mathcal{L}_g \mathcal{L}_f^{i} h(\mathbf{x})$ which are equal to zero for $i<r-1$ due to the relative degree assumption. However, by using negative constants (poles) $p_i<0$ and defining the series of functions $\nu_i: \mathcal{X} \rightarrow \mathbb{R}$ and corresponding superlevel sets $C_i$ for $i \in \{0,1, ...,r\}$ as

\small
\begin{align*}
    \nu_0(\mathbf{x}) &= h(\mathbf{x}), & C_0 &= \{ \mathbf{x} : \nu_0(\mathbf{x}) \geq 0 \}, \\
    \nu_1(\mathbf{x}) &= \dot{\nu}_0(\mathbf{x}) - p_1 \nu_0(\mathbf{x}), & C_1 &= \{ \mathbf{x} : \nu_1(\mathbf{x}) \geq 0 \}, \\
    &\vdots & &\vdots \\
    \nu_r(\mathbf{x}) &= \dot{\nu}_{r-1}(\mathbf{x}) - p_r \nu_{r-1}(\mathbf{x}), & C_r &= \{ \mathbf{x} : \nu_r(\mathbf{x}) \geq 0 \},
\end{align*}
\normalsize
it is shown in \cite{ECBF} that the following theorem holds

\begin{theorem}\cite{ames2019control}\label{thm1}
If $C_r$ is forward-invariant and $\mathbf{x}_0 \in \bigcap_{i=0}^r C_i$ then $\mathcal{C}_0$ is forward-invariant.
\end{theorem}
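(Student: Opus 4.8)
The plan is to prove the chain of implications by \emph{backward induction} on the index $i$, running from $i = r$ down to $i = 0$, establishing at each step that $C_i$ is forward-invariant along the closed-loop trajectory $\mathbf{x}(t)$ that starts at $\mathbf{x}_0 \in \bigcap_{i=0}^r C_i$. The base case is immediate: $C_r$ is forward-invariant by hypothesis, so $\nu_r(\mathbf{x}(t)) \geq 0$ for all $t \geq t_0$. The whole argument hinges on turning the algebraic recursion defining the $\nu_i$ into a differential inequality that each $\nu_i$ must satisfy along the trajectory.

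For the inductive step, suppose $\nu_{i+1}(\mathbf{x}(t)) \geq 0$ for all $t \geq t_0$. I would rearrange the defining recursion $\nu_{i+1} = \dot{\nu}_i - p_{i+1}\nu_i$ into
\begin{equation*}
    \dot{\nu}_i(\mathbf{x}(t)) = \nu_{i+1}(\mathbf{x}(t)) + p_{i+1}\,\nu_i(\mathbf{x}(t)) \geq p_{i+1}\,\nu_i(\mathbf{x}(t)),
\end{equation*}
which is valid \emph{along the entire trajectory} precisely because $\nu_{i+1} \geq 0$ by the induction hypothesis. The key step is then a comparison-lemma (integrating-factor) argument: multiplying by $e^{-p_{i+1}(t-t_0)}$ shows that $t \mapsto \nu_i(\mathbf{x}(t))\,e^{-p_{i+1}(t-t_0)}$ is nondecreasing, hence
\begin{equation*}
    \nu_i(\mathbf{x}(t)) \geq \nu_i(\mathbf{x}_0)\,e^{p_{i+1}(t-t_0)} \geq 0 ,
\end{equation*}
where the final inequality uses $\nu_i(\mathbf{x}_0) \geq 0$. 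This last fact holds exactly because $\mathbf{x}_0$ lies in $C_i$, which is where membership in the \emph{full} intersection $\bigcap_{i=0}^r C_i$ is needed rather than membership in $C_r$ alone. Thus $\nu_i(\mathbf{x}(t)) \geq 0$ for all $t$, i.e. $C_i$ is forward-invariant, closing the induction. Iterating down to $i = 0$ gives $\nu_0(\mathbf{x}(t)) = h(\mathbf{x}(t)) \geq 0$ for all $t \geq t_0$, which is the forward-invariance of $C_0$ claimed in the theorem.

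I expect the main obstacle to be the comparison step rather than the algebra. Care is required to ensure the differential inequality holds for all $t$ and not merely at $t_0$, which is why the induction hypothesis must already furnish $\nu_{i+1} \geq 0$ \emph{globally in time} before the step can be applied; this is the subtle dependency that makes the backward ordering of the induction essential. I would also want to verify the implicit regularity conditions: that the closed-loop trajectory exists and remains in the region where $h$ retains uniform relative degree $r$, so that every $\nu_i$ is well-defined and differentiable along $\mathbf{x}(t)$. It is worth remarking that the sign condition $p_i < 0$ is not actually needed for positivity to propagate, since $e^{p_{i+1}(t-t_0)} > 0$ for any real pole; rather, the negativity places the poles in the left half-plane and endows the construction with its exponential-stability interpretation.
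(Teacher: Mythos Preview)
Your argument is correct and is precisely the standard proof: the paper does not supply its own proof of this theorem but simply cites \cite{ECBF} and \cite{ames2019control}, and the backward-induction-plus-comparison-lemma argument you outline is exactly the one given in those references. Your side remark that the sign of $p_{i+1}$ is immaterial for the positivity propagation (only for the exponential-decay interpretation) is also accurate.
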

Note that Theorem \ref{thm1} additionally requires conditions of the initial state $\mathbf{x}_0$ to hold in addition to the invariance of $\mathcal{C}_r$ to ensure invariance of $\mathcal{C}_0$. If the function $\nu_r(\mathbf{x})$ is a \ac{cbf}, then $h(\mathbf{x})$ is said to be an exponential \ac{cbf} (ECBF).

\subsection{Composite Control Barrier Functions}
Enforcing multiple safety constraints simultaneously on a system can lead to practical challenges, especially if the number of constraints considered becomes large. In \cite{compositeCBFames}, a method to construct a single CBF as a logical AND-OR composition from multiple, different safety constraints through a softmin/softmax is described. This approach was also described in \cite{compositeCBFhoagg} for high-order systems with mixed relative degree. For the sake of brevity, we recite the method focusing on ECBFs. We consider a set of functions $\{h_i(\mathbf{x})\}_{i=1}^N$, where $h_i$ is an ECBF of relative degree $r_i$. Furthermore, we assume the safe set $\mathcal{S} = \bigcap_{i=1}^{N} \{\mathbf{x} : h_i(\mathbf{x}) \geq 0\}$ is nonempty. Defining the intermediate high-order functions

\small
\begin{equation}\label{ccbf1}
    \nu_{i,j+1} = \mathcal{L}_f \nu_{i,j} - p_{i,j} \nu_{i,j} ,
\end{equation}
\normalsize
where $\nu_{i,0} = h_i$, $j \in \{0,1, \hdots, r_{i-1} \}$ and $p_{i,j}<0$, we define the sets

\small
\begin{equation}\label{ccbf2}
    \mathcal{C}_{i,j} = \{\mathbf{x} : \nu_{i,j}(\mathbf{x}) \geq 0\} \quad \text{,} \quad C_i = \bigcap_{j=0}^{r_i-1} \mathcal{C}_{i,j}.
\end{equation}
\normalsize
By application of Theorem \ref{thm1} and Definition~\ref{def:cbf} we arrive at the following lemma:
\begin{lemma}\label{lemma1}
If $\mathbf{x}_0 \in \mathcal{C}_i$ and $\nu_{i,r_{i-1}}$ satisfies $\|\mathcal{L}_g \nu_{i,r_{i-1}}(\mathbf{x})\|>0~ \forall \mathbf{x} \in \mathcal{X}$, then the condition with $\alpha > 0$

\small
\begin{equation}\label{ccbf3}
    \mathcal{L}_f \nu_{i,r_{i-1}}(\mathbf{x}) + \mathcal{L}_g \nu_{i,r_{i-1}}(\mathbf{x}) \mathbf{u} \geq -\alpha \nu_{i,r_{i-1}}(\mathbf{x}) \quad \forall t \geq t_0
\end{equation}
\normalsize
implies $\mathbf{x} \in \mathcal{C}_i \quad~ \forall t \geq t_0$ and thus $h_i(\mathbf{x})\geq0$. Also, $\nu_{i,r_{i-1}}$ is a CBF of relative degree~1.
\end{lemma}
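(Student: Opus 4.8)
The plan is to establish the two assertions separately: that $\nu_{i,r_i-1}$ is an admissible relative-degree-one CBF whose barrier condition is exactly \eqref{ccbf3}, and that enforcing \eqref{ccbf3} keeps the entire ECBF cascade $\{\nu_{i,j}\}_{j=0}^{r_i-1}$ nonnegative, so that in particular $h_i=\nu_{i,0}$ stays nonnegative along the closed-loop trajectory.

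For the CBF claim, I would first note that the hypothesis $\|\mathcal{L}_g\nu_{i,r_i-1}(\mathbf{x})\|>0$ on all of $\mathcal{X}$ says precisely that the first time derivative of $\nu_{i,r_i-1}$ depends explicitly on $\mathbf{u}$, i.e.\ $\nu_{i,r_i-1}$ has uniform relative degree one; it also forces $\nabla_{\mathbf{x}}\nu_{i,r_i-1}\neq 0$ wherever $\nu_{i,r_i-1}=0$, so the nondegeneracy requirement imposed in the preliminaries holds on the boundary of $\mathcal{C}_{i,r_i-1}$. Taking the linear extended class-$\mathcal{K}_\infty$ function $s\mapsto\alpha s$ (admissible since $\alpha>0$), condition \eqref{ccbf3} coincides with the CBF inequality \eqref{CBF_lie_condition} of Definition~\ref{def:cbf}; because $\mathcal{L}_g\nu_{i,r_i-1}\neq 0$, its left-hand side can always be raised above the threshold by a suitable $\mathbf{u}$, so the supremum condition is met and $\nu_{i,r_i-1}$ is a valid CBF. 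Invoking the forward-invariance property recalled after Definition~\ref{def:cbf} (cf.\ \cite{ames2016control}), any control satisfying \eqref{ccbf3} for all $t\geq t_0$ renders the top set $\mathcal{C}_{i,r_i-1}$ forward invariant; since $\mathbf{x}_0\in C_i\subseteq\mathcal{C}_{i,r_i-1}$, we obtain $\nu_{i,r_i-1}(\mathbf{x}(t))\geq 0$ for all $t\geq t_0$. This supplies both hypotheses of Theorem~\ref{thm1}: invariance of the highest-order set together with $\mathbf{x}_0\in C_i$.

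The step I expect to require the most care is propagating this nonnegativity down the cascade to conclude $\mathbf{x}(t)\in C_i$ for all $t$, rather than only the membership in the single set $\mathcal{C}_{i,0}$ delivered directly by Theorem~\ref{thm1}. I would argue by downward induction on $j$. The relative-degree-$r_i$ assumption on $h_i$ gives $\mathcal{L}_g\nu_{i,j}=0$ for every $j<r_i-1$, so that $\dot{\nu}_{i,j}=\mathcal{L}_f\nu_{i,j}$ is control-independent; rearranging \eqref{ccbf1} then yields $\dot{\nu}_{i,j}=\nu_{i,j+1}+p_{i,j}\nu_{i,j}$. Under the inductive hypothesis $\nu_{i,j+1}(\mathbf{x}(t))\geq 0$ this becomes the differential inequality $\dot{\nu}_{i,j}\geq p_{i,j}\nu_{i,j}$, and with $p_{i,j}<0$ and $\nu_{i,j}(\mathbf{x}_0)\geq 0$ the comparison lemma gives $\nu_{i,j}(\mathbf{x}(t))\geq\nu_{i,j}(\mathbf{x}_0)\,e^{p_{i,j}(t-t_0)}\geq 0$. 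Iterating from $j=r_i-2$ down to $j=0$ shows $\nu_{i,j}(\mathbf{x}(t))\geq 0$ for all $j$, i.e.\ $\mathbf{x}(t)\in C_i$ for all $t\geq t_0$, and in particular $h_i(\mathbf{x}(t))=\nu_{i,0}(\mathbf{x}(t))\geq 0$. The principal subtlety throughout is the relative-degree bookkeeping: one must confirm $\mathcal{L}_g\nu_{i,j}=0$ for $j<r_i-1$ so that each intermediate derivative is input-free and the comparison argument applies, this being exactly the cascade mechanism underlying Theorem~\ref{thm1}.
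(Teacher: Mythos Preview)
Your proposal is correct and follows the same route the paper indicates: the paper does not give a standalone proof but simply states that the lemma is obtained ``by application of Theorem~\ref{thm1} and Definition~\ref{def:cbf},'' which is exactly your two-step structure (Definition~\ref{def:cbf} for the relative-degree-one CBF claim, Theorem~\ref{thm1} for propagating invariance). Your explicit downward induction to secure $\mathbf{x}(t)\in C_i$ rather than merely $\mathcal{C}_{i,0}$ is a welcome elaboration---it is precisely the comparison-lemma cascade underlying the proof of Theorem~\ref{thm1} in \cite{ECBF}, and applying Theorem~\ref{thm1} to each truncated chain $\nu_{i,j},\ldots,\nu_{i,r_i-1}$ would yield the same conclusion---so this is added rigor rather than a different argument.
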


Our objective is to satisfy all constraints jointly, e.g. $\mathbf{x} \in \mathcal{S}$, which can be achieved by ensuring $\mathbf{x} \in \mathcal{C} = \bigcap_{i=i}^{N} \mathcal{C}_{i}$. In~\cite{compositeCBFames}, the intersection set $\mathcal{C}$ is compactly expressed through the $\min$ operation as $\{\mathbf{x} : \nu_{i,j}(\mathbf{x}) \geq 0\} = \{\mathbf{x} : \min_i \nu_{i,r_{i}-1}(\mathbf{x}) \geq 0\}$. Correspondingly, a new function using the soft minimum is defined in \cite{compositeCBFames} as a smooth under-approximation of the set $\mathcal{C}$ as

\begin{equation}\label{ccbf}
    h(\mathbf{x}) = - \frac{1}{\kappa} \log \sum_i e^{-\kappa \nu_{i,r_{i-1}}(\mathbf{x})},
\end{equation}
where we have $\{\mathbf{x} : h(\mathbf{x}) \geq 0\} \subseteq \mathcal{C}$ with the parameter $\kappa > 0$. The Lie derivatives of $h(\mathbf{x})$ are given by the functions

\small
\begin{equation}
    \mathcal{L}_f h(\mathbf{x}) = \sum_i \lambda_i(\mathbf{x}) \mathcal{L}_f h_i(\mathbf{x}) \text{,} \quad
    \mathcal{L}_g h(\mathbf{x}) = \sum_i \lambda_i(\mathbf{x}) \mathcal{L}_g h_i(\mathbf{x}),
\end{equation}
\normalsize
where $\lambda_i(\mathbf{x}) = e ^ {-\kappa (h_i(\mathbf{x}) - h(\mathbf{x}) )}$.

\begin{theorem}
The function given by \eqref{ccbf} is a CBF for the system \eqref{system} if and only if~\cite{compositeCBFames}

\small
\begin{equation}\label{theorem3_eq}
    \sum_i \lambda_i(\mathbf{x}) \mathcal{L}_g h_i(\mathbf{x}) = 0 \Longrightarrow \sum_i \lambda_i(\mathbf{x}) (\mathcal{L}_f h_i(\mathbf{x}) + \alpha  h_i(\mathbf{x})) \geq 0
\end{equation}
\normalsize
holds for $\alpha > 0$ and for all $\mathbf{x} \in \mathcal{X}$.
\end{theorem}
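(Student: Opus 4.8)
The plan is to work directly from the CBF inequality \eqref{CBF_lie_condition} of Definition~\ref{def:cbf}, exploiting the fact that the supremum over unconstrained inputs collapses the condition onto a single set of states. First I would record the two structural facts about the soft-minimum \eqref{ccbf}: the weights satisfy $\lambda_i(\mathbf{x}) > 0$ and $\sum_i \lambda_i(\mathbf{x}) = 1$, which follows immediately from $\sum_i e^{-\kappa h_i(\mathbf{x})} = e^{-\kappa h(\mathbf{x})}$, and consequently both Lie derivatives $\mathcal{L}_f h = \sum_i \lambda_i \mathcal{L}_f h_i$ and $\mathcal{L}_g h = \sum_i \lambda_i \mathcal{L}_g h_i$ are convex combinations of the component Lie derivatives, obtained by applying the chain rule to \eqref{ccbf}. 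Each component $h_i$ is the relative-degree-one function $\nu_{i,r_i-1}$ supplied by Lemma~\ref{lemma1}, whose nonvanishing input Lie derivative is what makes the construction well posed.

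Next I would carry out the key reduction. Because $\mathbf{u}$ ranges over an unconstrained linear space, $\sup_{\mathbf{u}}[\mathcal{L}_f h + \mathcal{L}_g h\,\mathbf{u}] = +\infty$ at every state where $\mathcal{L}_g h(\mathbf{x}) \neq 0$, so the CBF inequality is automatically satisfied there and carries no information. The condition is therefore binding only on the set $\{\mathbf{x} : \mathcal{L}_g h(\mathbf{x}) = 0\}$, i.e. $\{\mathbf{x} : \sum_i \lambda_i \mathcal{L}_g h_i = 0\}$, which is exactly the antecedent of \eqref{theorem3_eq}. On that set the supremum equals $\mathcal{L}_f h$, and the CBF inequality reduces to $\mathcal{L}_f h(\mathbf{x})$ plus a comparison term being nonnegative; substituting the convex-combination form of $\mathcal{L}_f h$ and the comparison term $\alpha \sum_i \lambda_i h_i$ produces precisely the consequent $\sum_i \lambda_i(\mathcal{L}_f h_i + \alpha h_i) \geq 0$. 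This shows that the CBF inequality holding for all $\mathbf{x}$ is logically equivalent to the implication \eqref{theorem3_eq} holding for all $\mathbf{x}$.

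The two directions of the biconditional then fall out of this equivalence. For sufficiency I would assume \eqref{theorem3_eq}: at states with $\mathcal{L}_g h \neq 0$ the supremum is $+\infty$, and at states with $\mathcal{L}_g h = 0$ the consequent furnishes exactly the required lower bound, so \eqref{CBF_lie_condition} holds everywhere and $h$ is a CBF. For necessity I would contrapose: if the implication failed at some $\mathbf{x}^\star$, then $\mathcal{L}_g h(\mathbf{x}^\star) = 0$ while $\sum_i \lambda_i(\mathcal{L}_f h_i + \alpha h_i) < 0$; there the supremum equals $\mathcal{L}_f h(\mathbf{x}^\star)$ and violates the CBF inequality, so $h$ cannot be a CBF.

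The main obstacle I anticipate is pinning down the correct comparison term on the binding set. The gradient weighting of the soft-minimum makes the convex combination $\sum_i \lambda_i h_i$ appear in place of $h$ itself; since the soft-minimum obeys $h \leq \sum_i \lambda_i h_i$ with a gap bounded by $\tfrac{1}{\kappa}\ln N$, the two are not identical, and the care lies in justifying that $\alpha \sum_i \lambda_i h_i$ --- the convex combination of the per-constraint terms $\alpha h_i$ that the linearity of the class-$\mathcal{K}$ function delivers --- is the consistent right-hand side for the composite function. Verifying this identification, together with the partition-of-unity property $\sum_i \lambda_i = 1$ and the relative-degree-one guarantee of Lemma~\ref{lemma1}, is where the real work sits; the sup-over-$\mathbf{u}$ dichotomy and the two implications are then routine.
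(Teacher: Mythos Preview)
The paper does not supply its own proof of this theorem; it is quoted from \cite{compositeCBFames} and used as a black box, so there is no in-paper argument to compare your proposal against. Your outline---split on whether $\mathcal{L}_g h(\mathbf{x})$ vanishes, use unbounded $\mathbf{u}$ to trivialise the nonvanishing case, and read off the drift condition on the singular set---is the standard route and is the argument used in the cited source.

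The one place to be careful is exactly the point you flag. On the set $\{\mathcal{L}_g h=0\}$ the CBF inequality of Definition~\ref{def:cbf} demands $\mathcal{L}_f h(\mathbf{x}) + \alpha\,h(\mathbf{x}) \ge 0$, whereas the consequent of \eqref{theorem3_eq} reads $\mathcal{L}_f h(\mathbf{x}) + \alpha\sum_i\lambda_i(\mathbf{x})h_i(\mathbf{x}) \ge 0$. Because the soft-minimum satisfies $h(\mathbf{x}) \le \sum_i\lambda_i(\mathbf{x})h_i(\mathbf{x}) \le h(\mathbf{x}) + \tfrac{1}{\kappa}\ln N$, these two inequalities are not identical; the second is \emph{stronger} than the first by the additive gap $\tfrac{\alpha}{\kappa}\ln N$. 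Hence sufficiency goes through cleanly (your consequent implies the CBF inequality since $\alpha h \le \alpha\sum_i\lambda_i h_i$), but strict necessity with the same constant $\alpha$ does not follow from your dichotomy alone. In the original reference this is handled either by absorbing the bounded gap into the class-$\mathcal{K}$ comparison or by stating the condition directly in terms of $h$ rather than $\sum_i\lambda_i h_i$; the version printed here is a mild paraphrase. Your instinct that this identification is ``where the real work sits'' is correct, and it is worth making the one-sided implication explicit rather than claiming an exact equivalence of the two right-hand sides.
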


\begin{remark}\label{rmrk1}
By Lemma \ref{lemma1}, we have that $\nu_{i,r_{i-1}}(\mathbf{x})$ is a CBF and thus \eqref{theorem3_eq} trivially holds for $\lambda_p = 1$, $\lambda_i = 0 \quad \forall i \in \{1, \hdots ,N \} \setminus p$. This can be achieved by letting $\kappa \rightarrow \infty$ whenever $\nu_{p,r_{i-1}}(\mathbf{x}) < \nu_{i,r_{i-1}}(\mathbf{x}) \quad \forall i \in \{1, \hdots ,N \} \setminus p$. In practice, a large value of $\kappa$ is generally chosen to avoid evaluation of \eqref{theorem3_eq}. 
\end{remark}


\section{Composite CBF for Multirotor Collision Avoidance}\label{sec:approach}
In the following section, we propose a safety filter designed to ensure safety of the system under multiple state constraints. The intersection of control invariant set is in general not control invariant and imposing multiple \acp{cbf} across a control structure may lead to feasibility issues as reported in prior work \cite{cascaded_CBF} in the case of multirotors. Contrasting to prior work, we explicitly analyse the feasibility, showing that the infeasible set is a null-set on $\mathbb{R}^n$.

In the following, we consider the system model of a rate-controlled planar multirotor as in \cite{backstepping_CBF}, described by

\vspace{-3ex}
\small
\begin{subequations}\label{dynamics}
\begin{align}
    \dot x &= v, \label{position_dynamics} \\
    \dot v &= ge_3 - \frac{T}{m} R e_3, \label{linvel_dynamics} \\
    \dot R &= R[\Omega]_{\times}, \label{attitude_dynamics} \\
    \dot T &= \tau, \label{angvel_dynamics}
\end{align}
\end{subequations}
\normalsize
where $g$ is the intensity of gravity, and the artificial control dynamics $\dot T = \tau$ is introduced to achieve a uniform relative degree $r=3$ of any function of $x$ in the input variables $\mathbf{u} = [\Omega^T, \tau]^T$.
\begin{figure}
    \centering
    \includegraphics[width=0.35\linewidth]{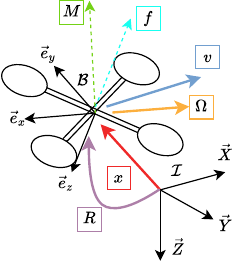}
    \hspace{1cm}
    \includegraphics[width=0.4\linewidth]{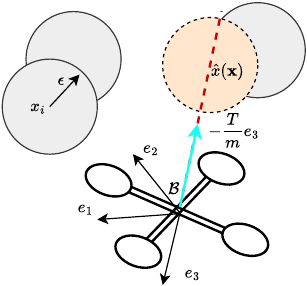}
    \caption{\small \textit{Left}: Coordinate conventions utilized. All equations are expressed in North-East-Down (NED) and Front-Right-Down (FRD) frame. \textit{Right}: Singular configuration example. A necessary requirement for conflicting constraints in \eqref{OCP} is that the virtual obstacle (orange) lies on the positive thrust axis (shown in red).}
    \label{fig:conventions}
    \vspace{-6ex}
\end{figure}
\subsection{Reference Controller}
As a reference controller, we propose a simple adaptation of the commonly used geometric controller \cite{Lee} to account for the change in system dynamics \eqref{dynamics}. The proposed geometric control law becomes

\small
\begin{subequations}
\begin{align}
    \Omega &= -k_R \frac{1}{2}[R_d^tR - R^T R_d]\textsuperscript{$\wedge$} + R^T R_d \Omega_{d}, \label{lee_attitude}\\
    T_d &= - (K_x e_x + K_v e_v + m g e_3 - m \Ddot{x})^T R e_3,\label{lee_position} \\
    \tau &= k_T (T_d - T) + \dot{T}_d, \label{lee_thrust}
\end{align}
\end{subequations}
\normalsize
where $(\cdot)$\textsuperscript{$\wedge$} is the vectorize operator. While it is straightforward to show global exponential stability of the thrust error $T_d - T$, local exponential stability of the attitude error $ \frac{1}{2}\text{tr}[\mathbb{I} - R_d^T R]$ can be shown in fashion similar to \cite{LeeProof}.

\subsection{Constraints}
In this work, we seek to enable safe control of multirotors in arbitrarily cluttered environments. To circumvent representing complex geometries by means of more elaborate functions, we choose to represent the collision constraint as a composition of multiple, simple constraints, describing the free space $\mathcal{C}_x$ as

\small
\begin{equation}\label{collision_constraint}
    \mathcal{C}_x = \bigcap_{i=1}^{N} \{x : \|x - x_i\| \geq \epsilon\} \text{,}
\end{equation}
\normalsize
where $x_i$ represents the obstacle locations and $\epsilon>0$. Furthermore, the model described above does not have any limit on the thrust value $T$. To account for the inability of most multirotors to exhibit negative thrust values, we also consider a thrust constraint set $\mathcal{C}_T$ as

\small
\begin{equation}\label{thrust_constraint}
    \mathcal{C}_T = \{ T : T \geq T_\text{min} \},
\end{equation}
\normalsize
for some positive constant $T_\text{min}\ge0$.
We do not consider additional input bounds on $\mathbf{u}$ and leave this to future work. Our goal is to ensure $x \in \mathcal{C}_x$ and $T \in \mathcal{C}_T$ at all times during operation, i.e. ensure invariance of the set $\mathcal{C}_x \cap \mathcal{C}_T$.

\subsection{Control Barrier Function Design}
To ensure $\mathbf{x} \in \mathcal{C}_x$ with obstacles located at $x_i$ for $i\in \{1, \hdots, N \}$, we define a set of functions as

\begin{equation}\label{position_cbf}
   \nu_{i,0}(\mathbf{x}) = ||(x-x_i)||^2 - \epsilon^2.
\end{equation}

Noting that $\nu_{i,0}$ have relative degree 3 in all control inputs, we define the functions, given two constants $p_0, p_1 < 0$,

\small
\begin{subequations}\label{composition1}
    \begin{equation}
        \nu_{i,1} = \mathcal{L}_f \nu_{i,0} - p_{0} \nu_{i,0},
    \end{equation}
    \begin{equation}
        \nu_{i,2} = \mathcal{L}_f \nu_{i,1} - p_{1} \nu_{i,1}.
    \end{equation}
\end{subequations}
\normalsize

Using lemma \ref{lemma1}, we note that $\nu_{i,2}$ are \acp{cbf} of relative degree 1 ($\|\mathcal{L}_g \nu_{i,2}(\mathbf{x})\|>0$ holds everywhere except for $x=x_i$, which is not a meaningful scenario). Constructing the composite \ac{cbf} as in \eqref{ccbf} with the additional saturation through the $\tanh$ function as proposed in \cite{compositeCBFames} with $\gamma>0$ yields the \ac{cbf} candidate

\small
\begin{equation}\label{composite_cbf}
    h_1(\mathbf{x}) = - \frac{\gamma}{\kappa} \log \sum_i e^{-\kappa 
\tanh{\frac{ \nu_{i,2}(\mathbf{x})}{\gamma}}}.
\end{equation}
\normalsize
To ensure invariance of the set $\mathcal{C}_x$, the standard condition 
\eqref{CBF_lie_condition} with \eqref{composite_cbf} becomes using $\alpha_1 > 0$ and $\lambda_i(\mathbf{x}) = e ^ {-\kappa (\nu_{i,2}(\mathbf{x}) - h_1(\mathbf{x}) )}$:

\small
\begin{equation}\label{composite_invariance}
\begin{split}
    \alpha_1 h_1(\mathbf{x}) +
    \underbrace{\frac{2T}{m}(x- \sum_i \frac{\lambda_i(\mathbf{x})}{\cosh^2(\frac{\nu_{i,2}(\mathbf{x})}{\gamma})} x_i)^T R [  [e_3]_\times, - e_3]}_{\mathcal{L}_g h_1(\mathbf{x})} \cdot \mathbf{u} \geq \\
    - \underbrace{ \sum_i \frac{\lambda_i(\mathbf{x})}{\cosh^2(\frac{\nu_{i,2}(\mathbf{x})}{\gamma})} (\mathcal{L}_f^3 \nu_{i,0} (\mathbf{x}) - p_1\mathcal{L}_f^2 \nu_{i,0} (\mathbf{x}) + p_1 p_0\mathcal{L}_f \nu_{i,0} (\mathbf{x}))}_{\mathcal{L}_f h_1(\mathbf{x})} \text{.}
\end{split}
\end{equation}
\normalsize
The structure of the above constraint regressor $\mathcal{L}_g h(\mathbf{x})$ allows the interpretation of a ``virtual obstacle'' at the location 

\small
\begin{equation}
    \hat{x}(\mathbf{x}) =  \sum_i \frac{\lambda_i(\mathbf{x})}{\cosh^2(\frac{\nu_{i,2}(\mathbf{x})}{\gamma})} x_i.
\end{equation}
\normalsize
The relative distance vector $(x- \hat{x}(\mathbf{x}))$ can be seen as the virtual obstacle direction, constraining the direction of the control input $\mathbf{u}$.
The virtual obstacle is therefore a weighted average of the original constraints $\nu_{i,0}$
with weighting terms $\lambda_i (\mathbf{x})$. Furthermore, the constraint regressor above is nonzero as long as $x \neq \hat{x}(\mathbf{x})$ since the column vectors of $ [[e_3]_\times, - e_3]$ form a basis of $\mathbb{R}^3$. We therefore have that $h(\mathbf{x})$ is a \ac{cbf} from (\ref{theorem3_eq}) iff $x = \hat{x}(\mathbf{x}) \Longrightarrow \mathcal{L}_f h(\mathbf{x}) + \alpha  h(\mathbf{x}) \geq 0$. While an online evaluation of this condition is impractical due to the non-fixed and arbitrary configuration of obstacles, we resort to the argument taken in Remark~\ref{rmrk1}, i.e. that for one $i$, $\underset{\kappa \rightarrow \infty}{\lim} h_1(\mathbf{x}) = \gamma \tanh({{ \nu_{i,2}(\mathbf{x})}/{\gamma}})$, while $\nu_{i,2}$ is a \ac{cbf}.

To ensure the thrust constraint by invariance of $\mathcal{C}_T$, we introduce the \ac{cbf} candidate with relative degree 1

\small
\begin{equation}\label{thrust_cbf}
    h_2(\mathbf{x}) = T - \epsilon_T.
\end{equation}
\normalsize
The corresponding invariance condition \eqref{CBF_lie_condition} becomes with  $\alpha_2 > 0$

\small
\begin{equation}\label{thrust_invariance}
     \underbrace{[\mathbb{O}^{1\times3}, 1]}_{\mathcal{L}_g h_2\mathbf{x})} \cdot \mathbf{u} \geq \underbrace{-\alpha_2 h_2 (\mathbf{x}) - \mathcal{L}_f h_2(\mathbf{x})}_{b_2(\mathbf{x})}.
\end{equation}
\normalsize
Looking at the constraint regressor, we see that $\mathcal{L}_g h(\mathbf{x})\neq 0 \quad \forall \mathbf{x} \in \mathcal{X}$, so \eqref{thrust_cbf} indeed is a \ac{cbf}.

We now proceed with a feasibility analysis to show that the stacking of the constraints only has an insignificant violation space, which is given by a zero-volume null set.

\subsection{Recursive Feasibility}

We analyze the feasibility of the constraints \eqref{composite_invariance} and \eqref{thrust_invariance} by considering the constraint regressors $\mathcal{L}_g h_1(\tilde{\mathbf{x}}) = [l_1(\mathbf{x}), l_2(\mathbf{x}), 0, l_3(\mathbf{x})]$ and $\mathcal{L}_g h_2(\mathbf{x})$. To result in conflicting constraints, we require $\mathcal{L}_g h_1(\mathbf{x}) \cdot \mathcal{L}_g h_2(\mathbf{x})^T <0$ which is true only when 

\small
\begin{equation}\label{infeasibility1}
    \frac{2T}{m}(x- \hat{x}(\mathbf{x}))^T R e_3 >0 \text{,}
\end{equation}
\normalsize
if $T > 0$. Thus, the virtual obstacle must be 'above' the quadrotor. Furthermore, the constraints can only conflict if

\small
\begin{equation}\label{infeasibility2}
    (x- \hat{x}(\mathbf{x})) \times R e_3 = 0^{3 \times 1}
\end{equation}
\normalsize
since otherwise the constraint sensitivity on roll and pitch in \eqref{composite_invariance} is nonzero. The conflicting cases (denoted as $\tilde{\mathbf{x}}$) therefore require that the thrust vector $-T e_3$ and the virtual obstacle-relative vector $(x- \hat{x}(\mathbf{x}))$ are colinear. In this case, we have $\mathcal{L}_g h_1(\tilde{\mathbf{x}}) = -\beta \mathcal{L}_g h_2(\tilde{\mathbf{x}})$ for $\beta\geq 0$ and thus $l_1(\mathbf{x})=0$, $l_2(\mathbf{x})=0$. This configuration is illustrated in Fig. \ref{fig:conventions}.

By studying the sensitivity of constraint \eqref{infeasibility2} for the idealized case (Remark~\ref{rmrk1}), we arrive at the following result:
\begin{proposition}
    The set of states $\mathcal{X}_\text{singular} \subset \mathcal{X}$ satisfying equation \eqref{infeasibility2} is a zero-volume set in $\mathcal{X}$ for $x \neq \hat{x}(\mathbf{x})$ when. That is $\mathcal{X}_\text{singular}$ is a null set.
\end{proposition}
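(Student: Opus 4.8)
The plan is to carry out the analysis in the idealized regime of Remark~\ref{rmrk1}, in which letting $\kappa \to \infty$ drives one weight $\lambda_p \to 1$ and the rest to zero, so that the virtual obstacle collapses to a single location and $\hat{x}(\mathbf{x})$ becomes a \emph{constant} $x_p$ independent of the state. Under this reduction the singular condition \eqref{infeasibility2} is the purely kinematic relation $(x - x_p) \times R e_3 = 0$, which couples only the position $x$ and the attitude $R$ (through the thrust axis $R e_3$) and places no constraint whatsoever on the velocity $v$ or the thrust $T$. I would first characterize the solution set in the position variable at fixed attitude, and then assemble the full-state conclusion by a slicing argument.

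The key step is to rewrite the cross product as a linear map in $x$. Using $(x - x_p) \times R e_3 = -[R e_3]_\times (x - x_p)$, the condition at fixed $R$ reads $[R e_3]_\times (x - x_p) = 0$. Since $R \in SO(3)$ gives $\|R e_3\| = 1 \neq 0$, the skew-symmetric matrix $[R e_3]_\times$ has rank exactly $2$ with one-dimensional kernel $\mathrm{span}(R e_3)$. Hence the admissible positions form the single line $\{x_p + s\, R e_3 : s \in \mathbb{R}\}$, which after removing the excluded point $x = x_p$ is still a one-dimensional subset of $\mathbb{R}^3$ and therefore has three-dimensional Lebesgue measure zero.

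To reach the stated claim I would slice the state space $\mathcal{X} = \mathbb{R}^3_x \times \mathbb{R}^3_v \times SO(3) \times \mathbb{R}_T$. For every fixed $(v, R, T)$ the preceding step shows that the $x$-section of $\mathcal{X}_\text{singular}$ is a line, hence null in $\mathbb{R}^3$; Tonelli's theorem applied to the indicator of $\mathcal{X}_\text{singular}$ then forces the total measure to vanish, so $\mathcal{X}_\text{singular}$ is a null set. Equivalently, $\mathcal{X}_\text{singular}$ is the image of the smooth map $(s, v, R, T) \mapsto (x_p + s\, R e_3, v, R, T)$ whose domain has dimension $n - 2$, and the image of a lower-dimensional manifold under a smooth map carries zero measure in $\mathcal{X}$.

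The step I expect to be the genuine obstacle is justifying the reduction to the idealized configuration. For finite $\kappa$ the virtual obstacle $\hat{x}(\mathbf{x})$ depends on $x$ itself through the weights $\lambda_i$ and the $\cosh^{-2}(\nu_{i,2}/\gamma)$ factors, so the per-fiber map is no longer affine in $x$ and its Jacobian becomes $\partial_x F = -[R e_3]_\times\bigl(I - \partial_x \hat{x}\bigr)$. To keep the codimension-$2$ conclusion one must argue that $\partial_x \hat{x}$ is a perturbation that does not collapse the generic rank of $\partial_x F$ below $2$ away from the coincidence set, or else lean on Remark~\ref{rmrk1} to restrict attention to the limiting single-constraint regime exactly as the proposition statement does. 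Some additional care is needed at $x = x_p$, where the relative vector degenerates and the thrust direction is undefined; this is precisely the point excluded by the hypothesis $x \neq \hat{x}(\mathbf{x})$.
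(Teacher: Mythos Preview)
Your argument is correct and genuinely different from the paper's. You fix the attitude $R$ and slice in position: since $[Re_3]_\times$ has rank $2$, the $x$-fiber of the singular condition is a line, and Tonelli gives zero measure. The paper instead fixes the relative vector and differentiates in the attitude direction: it computes $\nabla_{r_{i3}} q = -e_i \times (x-\hat{x}(\mathbf{x}))$ for $Re_3=[r_{13},r_{23},r_{33}]^T$, observes this is nonzero whenever $x\neq\hat{x}(\mathbf{x})$, and concludes that one can always pick $\Omega$ so that $\tfrac{d}{dt}q\neq 0$. Both routes work in the idealized $\kappa\to\infty$ regime you correctly invoke, and both yield codimension at least $2$. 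Your slicing argument is the cleaner measure-theoretic proof of the null-set claim as stated; the paper's sensitivity computation is less direct for the measure statement but buys an additional control-theoretic conclusion that is used immediately after the proposition, namely that the singular configuration can be left instantaneously by a suitable choice of $\Omega$. Your proof does not supply that corollary, though it would follow easily from your rank observation by noting that $\nabla_\Omega\dot q$ is nonzero.
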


\begin{proof}
Assume we have a solution $\tilde{\mathbf{x}} \in \mathcal{X}_\text{singular}$, denoted by $(x- \hat{x}(\tilde{\mathbf{x}})) \times R e_3 = q = 0^{3 \times 1}$.
For $\mathcal{X}_\text{singular}$ to be a zero volume set, we require $\frac{d}{dt} q \neq 0^{3 \times 1}$. Evaluating $\nabla_{r_{13}} q$, $\nabla_{r_{23}} q$ and $\nabla_{r_{33}} q$ with $R e_3 = [r_{13}, r_{23}, r_{33}]^T$ and $\kappa \rightarrow \infty$, we get 

\small
\begin{equation}\nonumber
\nabla_{r_{i3}} q = -e_i \times (x - \hat{x}(\mathbf{x})), \quad \forall i \in \{1,2,3\},
\end{equation}
\normalsize
which is nonzero if and only if $x \neq \hat{x}(\mathbf{x})$. In this case, one can always choose $\Omega$ such that $\frac{d}{dt} q \neq 0^{3 \times 1}$. This holds on $\mathcal{X}$ by defining $\nabla_{r} q$ on either side in discontinuities of $\hat{x}(\mathbf{x})$.
\end{proof}

The above result is important as it states that all possibly infeasible configurations form a null set, that can be left instantaneously by a suitable control action. In the infeasible set one of the constraints \eqref{composite_invariance} or \eqref{thrust_invariance} must be slackened to retain feasibility. 

\begin{figure*}[t]
    \centering
    \includegraphics[width=\linewidth]{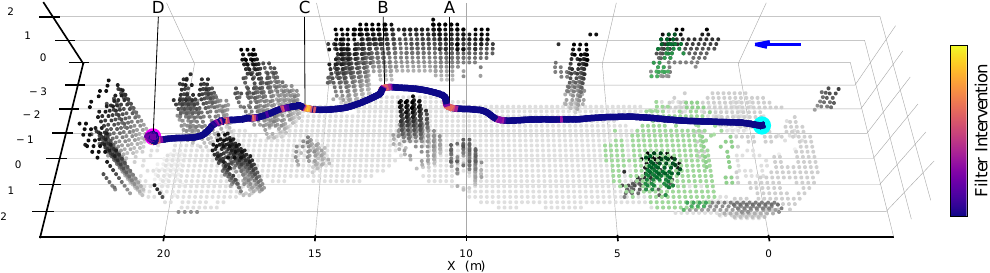}
    \caption{ \small Aggregated map and path of experiment A. An example of the obstacle map used by the safety filter is highlighted in green. The mission starts at the cyan circle on the right, receiving a constant velocity reference of $1 m/s$ in the positive $X$ direction, shown as a blue arrow. The norm of the intervention (cost of the \ac{qp} \eqref{OCP}) is color-coded into the path, highlighting the areas where the safety filter becomes active. Time instances marked A-D are reported in Fig. \ref{fig:CBF_elektro} for visualizing the corresponding numerical values.}
    \label{fig:mission_elektro}
    \vspace{-2ex}
\end{figure*}

\subsection{Safety Filter}
To integrate both constraints \eqref{composite_invariance} and \eqref{thrust_invariance} to jointly enforce invariance of $\mathcal{C}_x \cap \mathcal{C}_T $, we propose a safety filter, where the joint satisfaction of all constraints can be guaranteed everywhere except in the zero volume set $\mathcal{X}_\text{singular}$. Furthermore, the proposed architecture makes use of both thrust and attitude for obstacle avoidance and can easily be applied to in highly cluttered environments with hundreds of obstacles at a low computational cost.

The safe control law uses the proposed geometric tracking controller \eqref{lee_attitude} - \eqref{lee_position} as a nominal control law, on top of which a reactive safety filter as in \cite{backstepping_CBF} is applied. The control law $\mathbf{u}_\text{safe}$ is sequentially computed as 

\small
\begin{equation}\label{ref_controller}
\mathbf{u}_\text{ref} = 
\begin{bmatrix}
    \Omega \\ 
    \tau
\end{bmatrix}
= 
\begin{bmatrix}
    -k_R \frac{1}{2}[R_d^tR - R^T R_d]\textsuperscript{$\wedge$} + R^T R_d \Omega_{d} \\ 
    k_T (T_d - T) + \dot{T}_d
\end{bmatrix},
\end{equation}
\vspace{-2ex}
\begin{subequations}\label{OCP}
    \begin{align}
    \mathbf{u}_\text{safe} &= \underset{\mathbf{u} \in \mathcal{U}}{\text{argmin}} \; (\mathbf{u} - \mathbf{u}_\text{ref})^T P (\mathbf{u} - \mathbf{u}_\text{ref}) \\
    \text{s.t.} \quad 
    &\begin{bmatrix} \label{stacked_constraints}
        \mathcal{L}_g h_1(\mathbf{x}) \\
        \mathcal{L}_g h_2(\mathbf{x})
    \end{bmatrix}
    \mathbf{u} \geq 
    \begin{bmatrix}
        -\mathcal{L}_f h_1(\mathbf{x}) -\alpha_1 h_1(\mathbf{x})\\
        b_2(\mathbf{x})
    \end{bmatrix},
    \end{align}
\end{subequations}
\normalsize
where we simply stacked constraints \eqref{composite_invariance} and \eqref{thrust_invariance} in a single \ac{qp}. The safety filter thus corrects the commands of the reference controller to satisfy the invariance constraints. The resulting output matches the reference value if the constraints allow, and is minimally altered while satisfying the constraints otherwise.


\section{Evaluation Studies}\label{sec:evaluation}
\begin{figure}
    \centering
    \includegraphics[width=\linewidth]{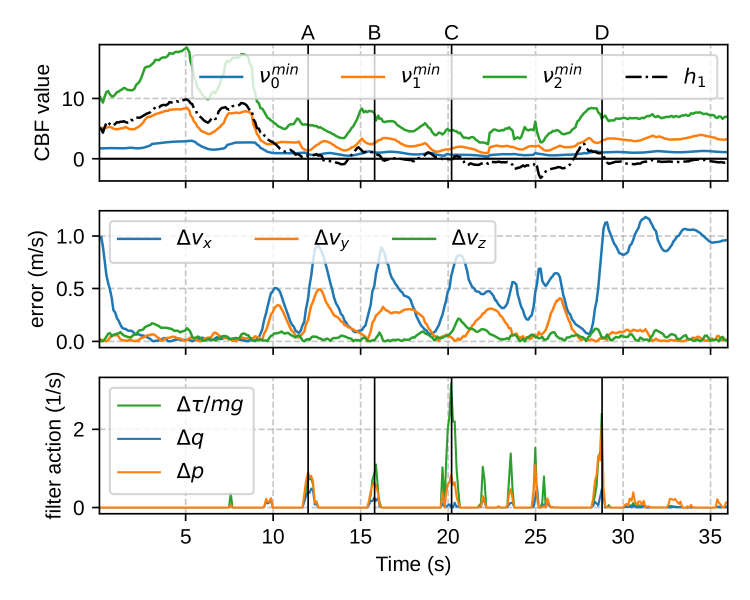}
    \vspace{-6ex}
    \caption{\small Constraint values (top) and velocity error w.r.t. the reference (center) and filter-induced deviation (bottom) during experiment A.}
    \label{fig:CBF_elektro}
    \vspace{-3ex}
\end{figure}

\subsection{Computational Scalability Study}
To evaluate the scalability of the proposed approach to cluttered environments with many obstacles, we evaluate the run times of the equations \eqref{composition1}, \eqref{composite_cbf} and \eqref{composite_invariance} for different environment sizes. Specifically, we ablate the required time for a single computation of $h(\mathbf{x})$, $\mathcal{L}_f h(\mathbf{x})$ and $\mathcal{L}_g h(\mathbf{x})$ in \eqref{composite_invariance} over different numbers of randomly placed obstacles. This is performed for the cases of computation of $\mathcal{L}_f h(\mathbf{x})$, $\mathcal{L}_g h(\mathbf{x})$ via the analytic gradients and via automatic gradient computation. Also, we evaluate this on a commercial laptop PC on CPU (Intel i7) and GPU (Nvidia RTX 3070Ti). All computations are performed in PyTorch. The processing times are displayed in table \ref{simulation_table}. Note that for fewer obstacles ($<10^3$) the processing time on CPU is lower than on GPU, while for more than $10^3$ obstacles, the computation benefits from the parallelism on the GPU.
These results show that the approach enables constructing the composite \ac{cbf} at high rates for environments with thousands of obstacles.

\begin{table}[t]
    \caption{Composition time of \ac{cbf} and Lie derivatives in milliseconds.}
    \label{simulation_table}    
    \centering
    \begin{tabular}{@{}llcccccc@{}}
        \toprule
        & & $10^1$ & $10^2$ & $10^3$ & $5 \cdot 10^3$ & $10^4$ \\
        \midrule
        \multirow{2}{*}{CPU} & Analytic & \textbf{0.264} & \textbf{0.279} & \textbf{0.485} & 2.575 & 3.353 \\
                             & Numeric  & 0.910 & 0.996 & 1.589 & 5.169 & 6.655 \\
        \midrule
        \multirow{2}{*}{GPU} & Analytic & 0.614 & 0.544 & 0.711 & \textbf{0.772} & \textbf{0.908} \\
                             & Numeric  & 2.606 & 2.703 & 2.760 & 2.653 & 2.624 \\
        \bottomrule
    \end{tabular}
    \vspace{-2ex}
\end{table}

\begin{figure*}[ht]
    \centering
    \includegraphics[width=\linewidth]{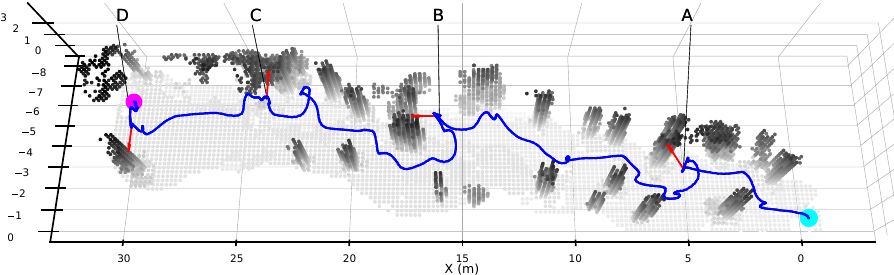}
    \caption{\small Aggregated map and path of experiment B. The mission starts at the cyan circle on the right. The quadrotor receives an adversarial velocity reference that actively tries to collide with obstacles. The red arrows depict this reference velocity for some selected time instances, A-D, which are reported in Fig. \ref{fig:CBF_dragvoll}.}
    \label{fig:mission_dragvoll}
    \vspace{-4ex}
\end{figure*}

\begin{figure}
    \centering
    \includegraphics[width=\linewidth]{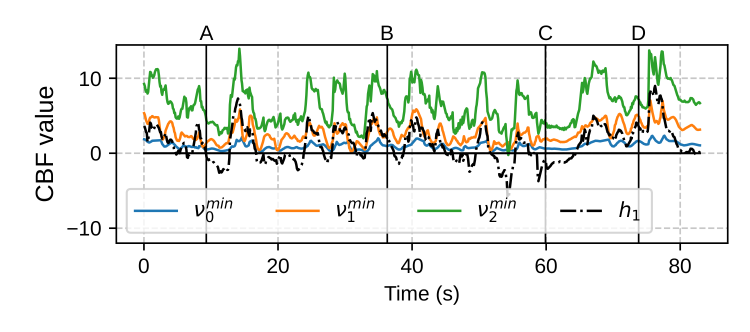}
    \vspace{-6ex}
    \caption{\small Constraint values during experiment B.}
    \label{fig:CBF_dragvoll}
    \vspace{-3.5ex}
\end{figure}

\subsection{Experimental Implementation}
The experiments relied on a custom-built quadrotor from \cite{harms2024neural} with dimensions $0.52\times 0.52\times0.31~\textrm{m}$ and a takeoff mass of $2.58~\textrm{kg}$. The system integrates PX4-based autopilot avionics for low-level control, together with an NVIDIA Orin NX single-board computer, as well as an Ouster OS0-64 LiDAR and a VectorNav VN-100 IMU used for odometry estimation as in~\cite{khattak2020complementary}. The mapping method \cite{voxblox} is then used to create a voxel-grid representation of the environment at a resolution of 20cm. The closest 400 occupied voxels are then used as the obstacles $x_i$ in \eqref{position_cbf} and updated at 10Hz.
The proposed safe control law \eqref{ref_controller} is implemented in Python, where we performed a minor adjustment by using the adaptive position control law introduced in~\cite{TalINDI}. This attenuates tracking errors introduced by modelling uncertainties in the thrust coefficient. The composition of the composite \ac{cbf} and its Lie derivatives (equations \eqref{composition1} - \eqref{composite_cbf}) uses PyTorch and the safety QP \eqref{OCP} is implemented in Casadi~\cite{Andersson_Casadi} using the QP solver qpOASES~\cite{Ferreau_qpOASES}. Furthermore, an estimate of the current thrust $T_\text{est}$ estimate is obtained by low-pass filtering the previous thrust commands. As a control output, the vector $[\Omega^T,T_\text{est}+ \tau \Delta t ]^T$ is sent to the autopilot, where $\Delta t$ is the time interval between control updates.
The safety filter and reference controller run on the Orin NX CPU with an update rate of 100Hz with the parameters listed in Tab. \ref{tab:parameters}.
Experiments A and B presented hereafter can be seen in the supplementary video.

\subsection{Experiment A}
In the first experiment, the quadrotor receives a constant reference velocity of $[1,0,0]\unit{m/s}$  and reference height of $1.3\unit{m}$ in an obstacle-filled hallway. The cluttered environment forces the safety filter to become active during many short periods, deflecting the quadrotor away from obstacles (points A and B), above an obstacle (point C), while finally reaching a dead-end where it remains in hover (point D). The trajectory and environment are shown in Fig. \ref{fig:mission_elektro}. The resulting set function values are plotted in Fig. \ref{fig:CBF_elektro}. It can be seen that all original constraints and higher order sets are satisfied over the entire mission since the minimum over constraints of $\nu_{0,i}$, $\nu_{1,i}$, $\nu_{2,i}$ is positive over the entire mission. The \ac{cbf} value is mostly positive but displays slight crossings of $h(\mathbf{x})=0$. These minor violations are small and are expected in a real system due to modelling errors and time-varying observations. The velocity tracking errors induced by the safety filter are also visualized in Fig. \ref{fig:CBF_elektro}, showing clearly that the corrections induced by the safety filter induce tracking errors. It can also be seen from Fig. \ref{fig:CBF_elektro} that the safety filter acts on roll-rate ($p$), pitch-rate ($q$) and thrust-rate ($\tau$) to achieve safe actions. The results demonstrate the ability of the proposed safe control law to enforce constraint satisfaction over the entire mission.

\begin{table}[t]
    \centering
    \caption{Parameter values used in the experiments.}
    \label{tab:parameters}
    \setlength{\tabcolsep}{5.5pt} 
    \renewcommand{\arraystretch}{1} 
    \begin{tabular}{c|cccccccc}
        \toprule
        \textbf{Parameter} & $p_0$ & $p_1$ & $\alpha_1$ & $\gamma$ & $\kappa$ & $\epsilon$ & $\alpha_2$  & $\epsilon_T$ \\
        \midrule
        \textbf{Value} & -3 & -2 & 1 & 40 & 20 & 0.5 & 5 & 7.5 \\
        \bottomrule
    \end{tabular}
    \vspace{-4ex}
\end{table}

\subsection{Experiment B}
The second experiment takes place in a forest with tree trunks
and foliage as natural obstacles. The reference velocity in this experiment is given by an adversarial operator, which intentionally attempts to collide the quadrotor with the surroundings. Horizontal and vertical references are given to provoke collisions with various trees and with the ground. A slight wind was present during the experiment, adding unmodelled disturbances. The trajectory and environment are shown in Fig. \ref{fig:mission_dragvoll}. The figure highlights four instances during the experiment where the unsafe references of the operator are corrected by the safety filter. The \ac{cbf} values are plotted in Fig. \ref{fig:CBF_dragvoll}, showing that the original constraints are all satisfied during the entire experiment. However, the value of $h_1$ often drops below the zero line due to the present disturbances. This experiment demonstrates the applicability of the proposed method in realistic challenging conditions.

\section{Conclusions}\label{sec:conclusions}
This work presented a novel approach for safe navigation of multirotors in unknown, cluttered environments.
The proposed safety filter leverages a Composite \ac{cbf} formulation for synthesizing a single \ac{cbf} from an arbitrary number of 1D collision constraints representing point-wise obstacles.
The resulting \ac{cbf} is both computationally scalable, and is shown to be recursively feasible, except for a zero-volume set of infeasible configurations.
The proposed method is validated in two hardware experiments, in varying conditions, against an environment-agnostic and an adversarial policy.
Future work includes extending the method to a robust \ac{cbf} design to explicitly account for modelling uncertainty, and investigating approaches to also account for constraints on the control input, while retaining feasibility.

\bibliographystyle{IEEEtran}
\bibliography{references}

\begin{thebibliography}{10}
\providecommand{\url}[1]{#1}
\csname url@samestyle\endcsname
\providecommand{\newblock}{\relax}
\providecommand{\bibinfo}[2]{#2}
\providecommand{\BIBentrySTDinterwordspacing}{\spaceskip=0pt\relax}
\providecommand{\BIBentryALTinterwordstretchfactor}{4}
\providecommand{\BIBentryALTinterwordspacing}{\spaceskip=\fontdimen2\font plus
\BIBentryALTinterwordstretchfactor\fontdimen3\font minus \fontdimen4\font\relax}
\providecommand{\BIBforeignlanguage}[2]{{%
\expandafter\ifx\csname l@#1\endcsname\relax
\typeout{** WARNING: IEEEtran.bst: No hyphenation pattern has been}%
\typeout{** loaded for the language `#1'. Using the pattern for}%
\typeout{** the default language instead.}%
\else
\language=\csname l@#1\endcsname
\fi
#2}}
\providecommand{\BIBdecl}{\relax}
\BIBdecl

\bibitem{alejo2016reactive}
D.~Alejo, J.~A. Cobano, G.~Heredia, and A.~Ollero, ``A reactive method for collision avoidance in industrial environments,'' \emph{Journal of Intelligent \& Robotic Systems}, vol.~84, pp. 745--758, 2016.

\bibitem{dentler2016real}
J.~Dentler, S.~Kannan, M.~A.~O. Mendez, and H.~Voos, ``A real-time model predictive position control with collision avoidance for commercial low-cost quadrotors,'' in \emph{2016 IEEE conference on control applications (CCA)}.\hskip 1em plus 0.5em minus 0.4em\relax IEEE, 2016, pp. 519--525.

\bibitem{alexis2016robust}
K.~Alexis, C.~Papachristos, R.~Siegwart, and A.~Tzes, ``Robust model predictive flight control of unmanned rotorcrafts,'' \emph{Journal of Intelligent \& Robotic Systems}, vol.~81, pp. 443--469, 2016.

\bibitem{kahn2017uncertainty}
G.~Kahn, A.~Villaflor, V.~Pong, P.~Abbeel, and S.~Levine, ``Uncertainty-aware reinforcement learning for collision avoidance,'' \emph{arXiv preprint arXiv:1702.01182}, 2017.

\bibitem{kulkarni2024reinforcement}
M.~Kulkarni and K.~Alexis, ``Reinforcement learning for collision-free flight exploiting deep collision encoding,'' \emph{arXiv preprint arXiv:2402.03947}, 2024.

\bibitem{karaman2011sampling}
S.~Karaman and E.~Frazzoli, ``Sampling-based algorithms for optimal motion planning,'' \emph{The international journal of robotics research}, vol.~30, no.~7, pp. 846--894, 2011.

\bibitem{allen2016real}
R.~Allen and M.~Pavone, ``A real-time framework for kinodynamic planning with application to quadrotor obstacle avoidance,'' in \emph{AIAA Guidance, Navigation, and Control Conference}, 2016, p. 1374.

\bibitem{cascaded_CBF}
M.~Khan, M.~Zafar, and A.~Chatterjee, ``Barrier functions in cascaded controller: Safe quadrotor control,'' in \emph{2020 American Control Conference (ACC)}, 2020, pp. 1737--1742.

\bibitem{range_sensing_CBF}
\BIBentryALTinterwordspacing
\emph{{Safety-Critical Control of a 3D Quadrotor With Range-Limited Sensing}}, ser. Dynamic Systems and Control Conference, vol. Volume 1, 10 2016. [Online]. Available: \url{https://doi.org/10.1115/DSCC2016-9913}
\BIBentrySTDinterwordspacing

\bibitem{cbf_potential_field_analysis}
A.~Singletary, K.~Klingebiel, J.~Bourne, A.~Browning, P.~Tokumaru, and A.~Ames, ``Comparative analysis of control barrier functions and artificial potential fields for obstacle avoidance,'' in \emph{2021 IEEE/RSJ International Conference on Intelligent Robots and Systems (IROS)}, 2021, pp. 8129--8136.

\bibitem{backstepping_CBF}
J.~Kim and Y.~Kim, ``Safe control synthesis for multicopter via control barrier function backstepping,'' in \emph{2023 62nd IEEE Conference on Decision and Control (CDC)}, 2023, pp. 8720--8725.

\bibitem{collisionConeCBF}
M.~Tayal, R.~Singh, J.~Keshavan, and S.~Kolathaya, ``Control barrier functions in dynamic uavs for kinematic obstacle avoidance: A collision cone approach,'' in \emph{2024 American Control Conference (ACC)}, 2024, pp. 3722--3727.

\bibitem{compositeCBFtemporal}
\BIBentryALTinterwordspacing
A.~Safari and J.~B. Hoagg, ``Time-varying soft-maximum barrier functions for safety in unmapped and dynamic environments,'' 2024. [Online]. Available: \url{https://arxiv.org/abs/2409.01458}
\BIBentrySTDinterwordspacing

\bibitem{CBF_aided_teleop}
\BIBentryALTinterwordspacing
S.~Zhou, S.~Papatheodorou, S.~Leutenegger, and A.~P. Schoellig, ``Control-barrier-aided teleoperation with visual-inertial slam for safe mav navigation in complex environments,'' 2024. [Online]. Available: \url{https://arxiv.org/abs/2403.04331}
\BIBentrySTDinterwordspacing

\bibitem{dawson2022learning}
C.~Dawson \emph{et~al.}, ``Learning safe, generalizable perception-based hybrid control with certificates,'' \emph{IEEE Robotics and Automation Letters}, vol.~7, no.~2, pp. 1904--1911, 2022.

\bibitem{harms2024neural}
\BIBentryALTinterwordspacing
M.~Harms, M.~Kulkarni, N.~Khedekar, M.~Jacquet, and K.~Alexis, ``Neural control barrier functions for safe navigation,'' in \emph{2024 IEEE/RSJ International Conference on Intelligent Robots and Systems (IROS)}, 2024. [Online]. Available: \url{https://arxiv.org/abs/2407.19907}
\BIBentrySTDinterwordspacing

\bibitem{ames2019control}
A.~D. Ames, S.~Coogan, M.~Egerstedt, G.~Notomista, K.~Sreenath, and P.~Tabuada, ``Control barrier functions: Theory and applications,'' in \emph{2019 18th European control conference (ECC)}.\hskip 1em plus 0.5em minus 0.4em\relax IEEE, 2019, pp. 3420--3431.

\bibitem{ames2016control}
A.~D. Ames, X.~Xu, J.~W. Grizzle, and P.~Tabuada, ``Control barrier function based quadratic programs for safety critical systems,'' \emph{IEEE Transactions on Automatic Control}, vol.~62, no.~8, pp. 3861--3876, 2016.

\bibitem{ECBF}
Q.~Nguyen and K.~Sreenath, ``Exponential control barrier functions for enforcing high relative-degree safety-critical constraints,'' in \emph{2016 American Control Conference (ACC)}, 2016, pp. 322--328.

\bibitem{HOCBF}
W.~Xiao and C.~Belta, ``High-order control barrier functions,'' \emph{IEEE Transactions on Automatic Control}, vol.~67, no.~7, pp. 3655--3662, 2022.

\bibitem{BCBF}
A.~J. Taylor, P.~Ong, T.~G. Molnar, and A.~D. Ames, ``Safe backstepping with control barrier functions,'' in \emph{2022 IEEE 61st Conference on Decision and Control (CDC)}, 2022, pp. 5775--5782.

\bibitem{compositeCBFames}
T.~G. Molnar and A.~D. Ames, ``Composing control barrier functions for complex safety specifications,'' \emph{IEEE Control Systems Letters}, 2023.

\bibitem{compositeCBFhoagg}
P.~Rabiee and J.~B. Hoagg, ``Composition of control barrier functions with differing relative degrees for safety under input constraints,'' in \emph{2024 American Control Conference (ACC)}, 2024, pp. 3692--3697.

\bibitem{Lee}
T.~Lee, M.~Leok, and N.~H. McClamroch, ``Geometric tracking control of a quadrotor {UAV} on {SE(3)},'' in \emph{49th IEEE Conference on Decision and Control (CDC)}, 2010, pp. 5420--5425.

\bibitem{LeeProof}
------, ``Control of complex maneuvers for a quadrotor uav using geometric methods on se(3),'' 2011.

\bibitem{khattak2020complementary}
S.~Khattak, H.~Nguyen, F.~Mascarich, T.~Dang, and K.~Alexis, ``Complementary multi--modal sensor fusion for resilient robot pose estimation in subterranean environments,'' in \emph{2020 International Conference on Unmanned Aircraft Systems (ICUAS)}.\hskip 1em plus 0.5em minus 0.4em\relax IEEE, 2020, pp. 1024--1029.

\bibitem{voxblox}
H.~Oleynikova, Z.~Taylor, M.~Fehr, R.~Siegwart, and J.~Nieto, ``Voxblox: Incremental 3d euclidean signed distance fields for on-board mav planning,'' in \emph{2017 IEEE/RSJ International Conference on Intelligent Robots and Systems (IROS)}, 2017, pp. 1366--1373.

\bibitem{TalINDI}
E.~Tal and S.~Karaman, ``Accurate tracking of aggressive quadrotor trajectories using incremental nonlinear dynamic inversion and differential flatness,'' \emph{IEEE Transactions on Control Systems Technology}, vol.~29, no.~3, pp. 1203--1218, 2021.

\bibitem{Andersson_Casadi}
J.~A.~E. Andersson, J.~Gillis, G.~Horn, J.~B. Rawlings, and M.~Diehl, ``{CasADi} -- {A} software framework for nonlinear optimization and optimal control,'' \emph{Mathematical Programming Computation}, vol.~11, no.~1, pp. 1--36, 2019.

\bibitem{Ferreau_qpOASES}
H.~Ferreau, C.~Kirches, A.~Potschka, H.~Bock, and M.~Diehl, ``{qpOASES}: A parametric active-set algorithm for quadratic programming,'' \emph{Mathematical Programming Computation}, vol.~6, no.~4, pp. 327--363, 2014.

\end{thebibliography}

\end{document}